\newtheorem{theorem}{Theorem}[section]
\newtheorem{lemma}[theorem]{Lemma}
\newtheorem{definition}[theorem]{Definition}
\newtheorem{assumption}[theorem]{Assumption}
\newtheorem{remark}[theorem]{Remark}
\newcommand{\man}{\mathit{M}}
\newcommand{\manspd}{\mathrm{SPD}(n)}
\newcommand{\R}{\mathbb{R}}
\newcommand{\tansp}[1]{T_{#1} \man}
\newcommand{\tanspspd}[1]{T_{#1} \manspd}
\newcommand{\mansubset}{\mathit{X}}
\newcommand{\proj}{P_{\mansubset}}
\newcommand{\norm}[1]{\big \lVert #1 \big \rVert}
\newcommand{\rgrad}[1]{\text{grad} #1}
\newcommand{\expm}[1]{\text{Exp}_{#1}}
\newcommand{\logm}[1]{\text{Log}_{#1}}
\newcommand{\innerprod}[2]{\langle #1 , #2 \rangle}
\newcommand{\xopt}{x^*}
\newcommand{\regret}{\text{Regret}_T}
\newcommand{\sumt}{\sum_{t=1}^T}
\newcommand{\argmin}{\text{argmin}}
\begin{document}
\title{
Online Optimization on Hadamard Manifolds: Curvature Independent Regret Bounds on Horospherically Convex Objectives 
}
\author{Emre Sahinoglu and Shahin Shahrampour
\thanks{This work is partially supported by NSF under Awards ECCS-2240788 and ECCS-2442321.}
\thanks{Emre Sahinoglu and Shahin Shahrampour are with the Department of Mechanical \& Industrial Engineering,
        Northeastern University, Boston, MA 02115
        {\tt\small \ Email: \{sahinoglu.m \& s.shahrampour\}@northeastern.edu}.}%
}

\maketitle
\thispagestyle{empty}

\begin{abstract}

We study online Riemannian optimization on Hadamard manifolds under the framework of horospherical convexity (h-convexity). Prior work mostly relies on the geodesic convexity (g-convexity), leading to regret bounds scaling poorly with the manifold curvature. To address this limitation, we analyze Riemannian online gradient descent for h-convex and strongly h-convex functions and establish  $O(\sqrt{T})$ and $O(\log(T))$ regret guarantees, respectively. These bounds are curvature-independent and match the results in the Euclidean setting. We validate our approach with experiments on the manifold of symmetric positive definite (SPD) matrices equipped with the affine-invariant metric. In particular, we investigate online Tyler's $M$-estimation and online Fréchet mean computation, showing the application of h-convexity in practice. 
\end{abstract}


\section{Introduction}

Riemannian optimization has emerged as a powerful framework for solving constrained optimization problems where the feasible set naturally possesses a geometric structure. Unlike Euclidean methods that rely on projections to maintain feasibility, Riemannian approaches exploit the intrinsic geometry of the manifold to design algorithms that are both efficient and theoretically principled \cite{absil2009optimization}. In control theory, Riemannian optimization has been applied to the Linear–Quadratic Regulator (LQR) problem \cite{talebi2023policy}, system identification \cite{sato2019riemannian1}, consensus and synchronization on nonlinear state spaces \cite{sarlette2009consensus}, and covariance estimation on the manifold of positive-definite matrices \cite{moakher2005differential}. In machine learning, manifold-constrained optimization problems arise in low-rank matrix completion \cite{vandereycken2013low}, principal component analysis \cite{edelman1998geometry}, dictionary learning \cite{cherian2016riemannian}, and representation learning in hyperbolic spaces \cite{nickel2017poincare}. By aligning the optimization algorithms with the underlying geometry, Riemannian optimization not only ensures feasibility but also uncovers algorithmic principles that are central to modern large-scale learning and control applications.

While Riemannian optimization has been extensively studied in offline settings, many modern applications require algorithms that adapt to sequential data. This motivates the study of {\it online Riemannian optimization}, where at each round a learner selects a point on a subset of the manifold and subsequently observes a loss function defined on that point \cite{wang2025online,wang2023online}. A common performance metric for this setting is {\it regret}, which compares the cumulative loss of the online sequence with that of the best decision in hindsight. Online Riemannian optimization extends the classical theory of online convex optimization \cite{hazan2016introduction} to non-Euclidean spaces, introducing new challenges stemming from curvature, nonlinearity, and the lack of global linear structure. These challenges are particularly relevant in adaptive control, distributed learning, and streaming machine learning tasks, where the decision variable belongs to various manifolds, such as Lie groups, spheres, or the set of positive-definite matrices \cite{cherian2016riemannian} ,\cite{lezcano2019cheap,  sato2023riemannian}.

One central challenge in extending the optimization theory to manifolds is the proper generalization of convexity. The natural analogue of Euclidean convexity along line segments is the geodesic convexity (g-convexity), quantifying convexity along geodesics \cite{zhang2016first}. While this notion provides a useful foundation for convergence analysis, it also introduces nontrivial geometric complexities since classical inequalities, such as the law of cosines, take curvature-dependent forms that affect the attainable convergence rates. On Hadamard manifolds, simply connected spaces with non-positive curvature, curvature-adjusted cosine or comparison inequalities replace Euclidean quadratic expansions, rendering convergence guarantees more delicate \cite{zhang2016first}. These curvature-induced corrections highlight the fundamental differences between Euclidean and Riemannian settings, where regret bounds and convergence rates deteriorate due to curvature-dependent constants, involving both sectional curvature and the diameter of the feasible set. This motivates the exploration of alternative convexity notions, such as horospherical convexity (h-convexity), that more effectively capture the asymptotic geometric behavior in non-positively curved spaces \cite{criscitiello2025horospherically,goodwin2024subgradient,hirai2024convex,lewis2024horoballs}.

In this letter, we study Riemannian online optimization for h-convex and strongly h-convex objectives, characterized by Busemann functions as a natural extension of affine functions in Euclidean spaces. While recent work on g-convex objectives establishes regret bounds that scale poorly with respect to the lower bound of the sectional curvature, we aim for curvature-independent guarantees by focusing on h-convex objectives instead. In what follows, 
\begin{itemize}
    \item Section \ref{sec:background} reviews the background on online optimization and convexity notions on manifolds. We discuss both inner and outer characterizations of convexity on Hadamard manifolds and introduce h-convex objectives, defined via Busemann functions. In addition, we present a geometric lemma that establishes an inequality for geodesic triangles, which plays a key role in deriving the regret bound for strongly h-convex objectives.
    \item Section \ref{sec:theoretical} presents the technical assumptions as well as the Riemannian Online Gradient Descent (ROGD) algorithm. We analyze the regret rates, establishing $O(\sqrt{T})$ regret for h-convex objectives and $O(\log (T))$ regret for strongly h-convex objectives. Notably, these bounds match the corresponding Euclidean regret rates and do {\it not} involve curvature-dependent terms.
    
    \item Section \ref{sec:numerical} illustrates the performance of ROGD via numerical experiments. We consider the manifold of symmetric positive-definite (SPD) matrices, endowed with the affine-invariant metric. Our testbeds are the Tyler’s $M$-estimator problem for h-convex objectives and the Fr\'echet mean estimation for strongly h-convex objectives. Consistent with our theoretical claims, the numerical results demonstrate that ROGD achieves sublinear regret in both cases.
\end{itemize}

\subsection{Literature Review}

\textbf{Riemannian Optimization.} 
Over the past two decades, there has been a growing interest in Riemannian optimization, motivated by the prevalence of problems with non-Euclidean constraints in machine learning, signal processing, and control \cite{absil2009optimization, edelman1998geometry,sato2021riemannian}. Many classical Euclidean algorithms have been successfully extended to the Riemannian setting. Foundational work formalized retractions, vector transports, and convergence guarantees for first-order and second-order methods \cite{boumal2023introduction}, while stochastic  extensions such as Riemannian SGD \cite{bonnabel2013stochastic}, variance-reduced methods \cite{sato2019riemannian, zhang2016riemannian}, and 
decentralized techniques \cite{chen2021decentralized,sun2024global} provided practical tools for optimization on curved spaces. To capture convexity in the Riemannian setting, the notion of g-convexity was introduced, enabling the complexity analysis of first-order methods, mirroring Euclidean convergence results with the addition of curvature-dependent terms \cite{zhang2016first, vishnoi2018geodesic}. More recently, an alternative notion of convexity, i.e., h-convexity, has been explored, motivated by outer characterizations of convexity via Busemann functions and horospheres, leading to a new framework for optimization in non-positively curved Hadamard manifolds \cite{criscitiello2025horospherically,goodwin2024subgradient,lewis2024horoballs}.

\textbf{Online Riemannian Optimization.} The extension of Euclidean online convex optimization (OCO) to manifolds has gained significant attention recently, fueled by applications where decisions evolve on curved spaces. Building on OCO several works have analyzed g-convex functions on Hadamard manifolds, showing regret bounds for first-order and zeroth-order (bandit) ROGD, which depend on both the sectional curvature and the time horizon \cite{wang2023online,sahinoglu2025decentralizedonlineriemannianoptimization}. 
Beyond basic ROGD, optimistic and projection-free variants also attain static or dynamic regret guarantees under g-convexity \cite{hu2023riemannian,wang2025riemannian}, while bandit and tracking settings quantify how curvature perturbs rates in time-varying problems \cite{maass2022tracking}. Extensions also study positively curved spaces via adapted projection arguments, yielding analogous regret bounds under curvature constraints \cite{wang2023online}, and decentralized settings on manifolds with regret bounds that couple the problem geometry with the network connectivity \cite{chen2024decentralized,sahinoglu2025decentralizedonlineriemannianoptimization}. A recurring theme in the prior is that g-convexity introduces curvature-dependent terms, often via comparison inequalities, that can deteriorate the regret relative to the Euclidean case \cite{wang2023online}. Motivated by the outer characterization of convexity, recent work on h-convexity shows curvature-independent convergence rates for subgradient style methods in offline optimization on Hadamard manifolds \cite{criscitiello2025horospherically, lewis2024horoballs}. This suggests a pathway to {\it curvature-independent regret bounds} in {\it online} settings by leveraging Busemann functions, which is the focus of this paper. 

\section{Problem Formulation}
\label{sec:background}

\subsection{Online Optimization}
Euclidean OCO formalizes sequential decision-making under uncertainty in a dynamic environment \cite{hazan2016introduction}. In this setting, a learner interacts with the environment over $T$ rounds. At each round $t \in [T]:= \{1,2,\dots,T\}$ the learner selects a decision $x_t$ from a convex set $\mansubset$, after which the environment reveals a convex loss function 
$f_t : \mansubset \to \mathbb{R}$, and the learner incurs the loss $f_t(x_t)$. The decision-maker has no prior knowledge of $f_t$ when choosing $x_t$, making the process inherently sequential and adaptive. The standard performance measure is regret, which compares the cumulative loss of the learner to that of the best fixed decision in hindsight as
\begin{equation*}
    \regret := \sumt f_t(x_t)- \argmin_{x\in\mansubset}\sumt f_t(x).
\end{equation*}
The goal of OCO is to design algorithms ensuring sublinear regret, i.e., $Reg_T = o(T)$, so that the average per-round loss approaches that of the best fixed decision as $T \to \infty$.

In Riemannian OCO, the decision set $\mansubset \subseteq \man$, where $\man$ is a Riemannian manifold. Then, in the algorithm design, we must replace linear operations with their geometric counterparts via exponential and logarithmic maps. Under g-convexity, algorithms analogous to OGD achieve regret rates similar to the Euclidean setting, but the constants depend on the curvature due to cosine inequality adjustments in the analysis \cite{wang2023online}. Recent work of \cite{criscitiello2025horospherically} introduces h-convexity on Hadamard manifolds, which provides an outer envelope characterization of convexity, using which we will prove regret bounds that are independent of the curvature, narrowing the gap between Euclidean and Riemannian online optimization.


\subsection{Riemannian Optimization}
In this work, we focus on online optimization on Hadamard manifolds, i.e., simply connected, complete Riemannian manifolds whose sectional curvature is non-positive. 
The space is endowed with a smoothly varying inner product $g_x(\cdot,\cdot)$, simply denoted by $\innerprod{\cdot}{\cdot}$ on each tangent space $\tansp{x}$. A key geometric tool is the exponential map $\expm{x} : \tansp{x} \to \man$, which maps a tangent vector $v \in \tansp{x}$ to the endpoint of the geodesic starting at $x$ with initial velocity $v$. Its inverse, the logarithmic map $\text{Log}_x : \man \to \tansp{x}$, returns the initial velocity of the minimizing geodesic from $x$ to a point $y \in \man$. Since $\man$ is a Hadamard manifold, logarithmic map is well-defined and we can denote the Riemannian distance between  $x,y \in \man$ by $d(x,y)$ ($|xy|$ in the proofs). These operations enable natural extensions of gradient-based updates from Euclidean space to curved domains. Let us now discuss different characterizations of convexity in the Riemannian setting.

\textbf{Inner Characterization.} In Euclidean spaces, convexity is defined through the linear structure of $\R^n$. A function $f:\R^n \to \R$
is convex if its restriction to any line segment is convex. On Riemannian manifolds, the lack of a global linear structure precludes the direct use of line segments. Instead, the role of straight lines is played by geodesics, curves with zero intrinsic acceleration. This naturally leads to the notion of g-convexity based on the inner characterization of convexity \cite{criscitiello2025horospherically}, where $f:\man \to \mathbb{R}$ is geodesically convex if its restriction to any geodesic is convex in the Euclidean sense. This definition preserves many fundamental properties of convexity while respecting the intrinsic geometry of the manifold.

\textbf{Outer Characterization.} Another convexity notion on Hadamard manifolds arises based on the outer characterization of convexity. In Euclidean spaces, this notion is expressed through supporting hyperplanes and the representation of convex functions as point-wise suprema of affine functions. On Hadamard manifolds, the natural analogue of affine functions is Busemann functions, which are defined via geodesic rays and capture the asymptotic geometry of non-positively curved spaces. Using these tools, convex sets can be described as intersections of horoballs (the counterparts of Euclidean half-spaces), and convex functions can be represented as envelope suprema of scaled Busemann functions. This leads to the notion of h-convexity, where a function is h-convex precisely when it admits such an outer representation \cite{criscitiello2025horospherically}. This construction generalizes the Euclidean outer characterization of convexity to the Hadamard setting, offering a complementary perspective to g-convexity.

\begin{definition}[Busemann function] For a unit speed geodesic ray $\gamma:[0,\infty) \to \man$, the Busemann function $B_{\gamma}:\man \to \R$ is defined by $B_{\gamma}(x) = \lim_{t\to \infty}(d(\gamma(t),x)-t)$. We can also denote $B_{\gamma}$ as $B_{p,v}$ with $p=\gamma(0)$ and $v=-\gamma'(0)$.
\end{definition}

Formally, h-convexity can be defined using scaled Busemann functions. Given a point $y \in \man$ and a tangent vector $v \in \tansp{y}$, the associated scaled Busemann function $B_{y,v}:\man \to \mathbb{R}$ generalizes affine linear functions in Euclidean space. 

\begin{definition}[h-convex function]
\label{def:h-con}
A function $f:\man \to \R$ is h-convex if for all $y\in \man$, there exists a tangent vector $v \in \tansp{y}$ such that $f(x)-f(y) \geq B_{y,v}(x) ~ \forall x \in \man$.
\end{definition}

Equivalently, $f$ is h-convex if it can be expressed as the pointwise supremum of scaled Busemann functions \cite{criscitiello2025horospherically}. This definition provides an outer characterization of convexity on Hadamard manifolds, in contrast to g-convexity which follows an inner characterization. As shown in \cite{criscitiello2025horospherically}, h-convexity forms a subclass of g-convexity and admits a rich envelope representation that is particularly useful for optimization. Building on this notion, one can further define strong h-convexity.

\begin{definition}[$\mu$-strongly h-convex function]
\label{def:h-str-con}
A function $f:\man \to \R$ is $\mu$-strongly h-convex if for all $y \in \man $ there exists a tangent vector $v\in \tansp{y}$ such that 
\begin{equation}
    f(x)-f(y) \geq Q_{y,v}^{\mu}(x) := -\frac{1}{2\mu}\norm{v}^2+\frac{\mu}{2} d^2(\expm{y}(-\frac{1}{\mu}v),x), \nonumber
\end{equation}
holds $\forall x\in \man$.
\end{definition}

This refinement parallels the role of strong convexity in Euclidean spaces, ensuring not only uniqueness of minimizers but also improved stability and convergence guarantees for optimization algorithms on Hadamard manifolds.

In the analysis of strongly h-convex functions, we introduce an inequality for Hadamard spaces, which is an extension of Stewart's theorem in Euclidean spaces.
\begin{lemma}[Stewart's theorem in Hadamard spaces]
\label{lem:stew}
Let $\Delta abc$ be a geodesic triangle and $p$ be a point on the geodesic segment $bc$. We then have 
\begin{equation}
\label{eq:stew}
    |ab|^2|pc| + |ac|^2|pb| \geq (|pa|^2+ |pb||pc|)|bc|.
\end{equation}
\end{lemma}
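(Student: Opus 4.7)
The plan is to reduce the inequality to Stewart's theorem in the Euclidean plane via the standard CAT(0) comparison argument. Hadamard manifolds are CAT(0), so for any geodesic triangle $\Delta abc$ there is a comparison triangle $\Delta \bar a\bar b\bar c$ in $\mathbb{R}^2$ with the same side lengths, i.e., $|\bar a\bar b|=|ab|$, $|\bar a\bar c|=|ac|$, $|\bar b\bar c|=|bc|$. I would first pick the comparison point $\bar p$ on the segment $\bar b\bar c$ defined by $|\bar p\bar b|=|pb|$ and $|\bar p\bar c|=|pc|$, which is consistent since $|pb|+|pc|=|bc|=|\bar b\bar c|$.

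Next, I would invoke the classical Euclidean Stewart identity on $\Delta\bar a\bar b\bar c$ with the cevian $\bar a\bar p$:
\begin{equation*}
|\bar a\bar b|^2|\bar p\bar c|+|\bar a\bar c|^2|\bar p\bar b|=\bigl(|\bar p\bar a|^2+|\bar p\bar b|\,|\bar p\bar c|\bigr)|\bar b\bar c|.
\end{equation*}
Substituting the equalities of side lengths and the choice of $\bar p$, the left-hand side is precisely $|ab|^2|pc|+|ac|^2|pb|$, and the right-hand side becomes $(|\bar p\bar a|^2+|pb|\,|pc|)|bc|$. It remains to compare $|\bar p\bar a|$ with $|pa|$.

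This is exactly the content of the CAT(0) inequality: since $\bar p$ is the comparison point for $p$ on the side $bc$, and $\bar a$ is the vertex corresponding to $a$, the Hadamard (non-positive curvature) condition gives $|pa|\leq |\bar p\bar a|$, hence $|pa|^2 \leq |\bar p\bar a|^2$. Plugging this monotonicity into the Euclidean identity yields
\begin{equation*}
|ab|^2|pc|+|ac|^2|pb|=\bigl(|\bar p\bar a|^2+|pb|\,|pc|\bigr)|bc|\geq \bigl(|pa|^2+|pb|\,|pc|\bigr)|bc|,
\end{equation*}
which is the claimed inequality.

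There is no real obstacle beyond invoking the right two ingredients. The only subtle point is making sure the comparison point is defined consistently so that the Euclidean identity translates term by term into the Hadamard inequality; once that alignment is set up, the inequality direction is forced by the CAT(0) estimate $|pa|\leq |\bar p\bar a|$, and the result follows without any curvature-dependent constants, which is what makes the lemma suitable for the curvature-independent regret analysis in the subsequent section.
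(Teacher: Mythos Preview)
Your proof is correct, but it takes a different route from the paper's. You argue via CAT(0) comparison: build a Euclidean comparison triangle $\Delta\bar a\bar b\bar c$, apply the classical Stewart identity there, and then push the identity down to an inequality using $|pa|\le|\bar p\bar a|$. The paper instead works intrinsically: it applies the Hadamard cosine inequality twice at the point $p$, namely $|ab|^2\ge|pb|^2+|pa|^2-2\innerprod{\logm{p}b}{\logm{p}a}$ and the analogous bound with $c$ in place of $b$, and then eliminates the inner-product terms algebraically by multiplying by $|pc|$ and $|pb|$ respectively and using that $\logm{p}b/|pb|=-\logm{p}c/|pc|$ since $p$ lies on the segment $bc$. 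Your approach is more geometric and makes the reduction to the Euclidean case explicit, at the cost of invoking Euclidean Stewart as a separate fact; the paper's approach is more self-contained and computational, deriving everything from the single cosine inequality without leaving the manifold. Both yield the same curvature-free inequality with no loss in constants.
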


\begin{proof}
We apply the cosine law for Hadamard spaces \cite{alimisis2020continuous} twice as follows 
\begin{align*}
|ab|^2 &\geq |pb|^2 + |pa|^2 - 2\innerprod{\logm{p}b}{\logm{p}a} \\
|ac|^2 &\geq |pc|^2 + |pa|^2 - 2\innerprod{\logm{p}c}{\logm{p}a}. 
\end{align*}
Since $p$ lies on the geodesic segment $bc$, we have 
$\logm{p}b/|pb| = -\logm{p}c/|pc|$ and that $|pc|+|pb|=|bc|$. We can then eliminate the above inner product terms by multiplying the inequalities with $|pc|$ and $|pb|$ to obtain Equation \ref{eq:stew}.
\end{proof}

This inequality plays a central role in the analysis of strongly h-convex functions, 
and it is crucial for establishing the logarithmic regret bound. 
\section{Theoretical Analysis}
\label{sec:theoretical}

\subsection{Technical Assumptions} 
We consider the following assumptions, which are standard in online Riemannian optimization.
\begin{assumption}
\label{assum:set}
The set $\mansubset \subseteq \man$ is a g-convex subset of Hadamard manifold $\man$, and the diameter of $\mansubset$ is bounded by $D$, i.e., $d(x,y)\le D ~ \forall x,y \in \mansubset$. 
\end{assumption}

The geodesic convexity of $\mansubset$ is important to ensure that the projection operation $\proj(z):= \argmin_{x\in \mansubset}~d(x,z)$  is single valued and non-expansive, i.e., $d(\proj(z),\proj(y)) \le d(y,z)$ \cite{bacak2014convex}.
\begin{assumption}
 \label{assum:func}
For any $t\in [T]$ the function $f_t$ is $L$-Lipschitz, i.e., $|f_t(x)-f_t(y)|\le L d(x,y)$. 
\end{assumption}
Throughout, we impose the h-convexity (or strong h-convexity) on the function sequence. Convexity is a standard assumption in online optimization \cite{hazan2016introduction}. In Euclidean spaces, h-convexity coincides with g-convexity, whereas on Riemannian manifolds h-convexity imposes stronger conditions. In this work, we make no assumption on the lower bound of sectional curvature. Instead, we rely on h-convexity, which is a stronger condition than g-convexity, and consequently applies to a smaller class of functions. This allows us to obtain tighter regret bounds compared to the class of g-convex functions. 

\subsection{Algorithm}

We now present the Riemannian Online Gradient Descent (ROGD) algorithm \cite{wang2023online}, which is the natural extension of the Euclidean OGD method to the manifold setting.

\begin{algorithm}[t]
    \caption{Riemannian Online Gradient Descent Algorithm}
   \label{alg:rogd}
\begin{algorithmic}

    \STATE {\bfseries Input:} $\mansubset \subseteq \man$, time horizon $T$, step-size $\eta_t$, initial point $x_1$ 
    \FOR{$t=1$ {\bfseries to} $T$}
    \STATE $g_{t} = \rgrad{f_{t}}(x_{t}) $  
    \STATE $x_{t+1} = \proj{(\expm{x_{t}}(-\eta_t g_{t}))}$

    
    \ENDFOR

\end{algorithmic}
\end{algorithm}

The algorithm proceeds in two steps at each iteration. First, the learner  observes the Riemannian gradient $g_t$ of the function $f_t$ at point $x_t$. Second, the learner updates the iterates by moving along the exponential map to remain on the manifold $\man$, and subsequently projects them onto the subset $\mansubset$ to enforce feasibility. Since the set $\mansubset$ is g-convex, the projection oracle returns a unique solution. ROGD has been analyzed for g-convex and strongly g-convex functions \cite{wang2023online}; however, the analysis for h-convex  and strongly h-convex functions has remained open.

\subsection{Main Results}
In this section, we present the first static regret bounds for ROGD under h-convexity and strong h-convexity of functions.

\begin{theorem}[Regret for h-convex functions]
\label{thm:convex}Suppose that Assumptions \ref{assum:set} and \ref{assum:func} hold and  that $f_t$ is h-convex for any $t\in [T]$. Then, Algorithm \ref{alg:rogd} with the fixed step-size $\eta_t =\eta$ guarantees $O(\eta^{-1} +\eta T)$ static regret. In particular, the choice of $\eta=1/\sqrt{T}$ results in $O(\sqrt{T})$ static regret. 
\end{theorem}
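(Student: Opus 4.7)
The plan is to mimic the standard Euclidean OGD analysis, but with the Busemann inequality from Definition \ref{def:h-con} playing the role of the linear subgradient inequality and a Hadamard-style Busemann--distance inequality playing the role of the Euclidean identity $\norm{p - \eta v - x}^2 = \norm{p-x}^2 + 2\eta B_{p,v}(x) + \eta^2\norm{v}^2$. Because Busemann functions absorb the asymptotic geometry of the manifold without generating comparison factors, the resulting regret bound should be free of curvature-dependent constants.

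First, at each round $t$, Definition \ref{def:h-con} supplies $v_t \in \tansp{x_t}$ with $f_t(x) - f_t(x_t) \ge B_{x_t, v_t}(x)$ for every $x \in \man$; for differentiable $f_t$ the Riemannian gradient $g_t = \rgrad{f_t}(x_t)$ is a valid such $v_t$, and Assumption \ref{assum:func} gives $\norm{g_t} \le L$. Evaluating at $x = \xopt$ yields
$$f_t(x_t) - f_t(\xopt) \le -B_{x_t, g_t}(\xopt).$$

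Second, I would invoke the Hadamard Busemann--distance inequality
$$d^2(\expm{y}(-\eta v), x) \le d^2(y, x) + 2\eta B_{y,v}(x) + \eta^2 \norm{v}^2,$$
which I expect to obtain by applying CAT(0) convexity of $t \mapsto d^2(\expm{y}(-tv), x)$ on an interval $[0,T]$, passing to the limit $T \to \infty$, and using the defining asymptotic expansion of $B_{y,v}$ along its generating ray. Instantiated at $y = x_t$, $v = g_t$, $x = \xopt$, and combined with the non-expansiveness of $\proj$ guaranteed by Assumption \ref{assum:set}, this gives the one-step bound
$$d^2(x_{t+1}, \xopt) \le d^2(x_t, \xopt) + 2\eta B_{x_t, g_t}(\xopt) + \eta^2 \norm{g_t}^2.$$
Chaining with the first inequality and rearranging yields
$$f_t(x_t) - f_t(\xopt) \le \frac{1}{2\eta}\bigl(d^2(x_t, \xopt) - d^2(x_{t+1}, \xopt)\bigr) + \frac{\eta}{2} L^2.$$

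Summing over $t \in [T]$ telescopes the squared-distance terms and uses the diameter bound $d(x_1, \xopt) \le D$ from Assumption \ref{assum:set} to give $\regret \le D^2/(2\eta) + \eta T L^2 / 2 = O(\eta^{-1} + \eta T)$, and $\eta = 1/\sqrt{T}$ balances the two terms to $O(\sqrt{T})$. The main obstacle is the Busemann--distance inequality itself: the usual Hadamard cosine law controls $d^2(\expm{y}(-\eta v), x)$ only through the tangent inner product $\innerprod{v}{\logm{y} x}$, which by convexity of $B_{y,v}$ is a \emph{lower} bound on $B_{y,v}(x)$ and therefore does not close the argument. Replacing that inner product by $B_{y,v}(x)$ without incurring a curvature-dependent correction is exactly what distinguishes the h-convex analysis from the g-convex one; once this lemma is secured, the remainder is routine telescoping.
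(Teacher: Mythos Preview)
Your approach is correct and essentially identical to the paper's: both reduce the per-step bound to $f_t(x_t)-f_t(\xopt)\le -B_{x_t,g_t}(\xopt)$ via h-convexity, then control $-B_{x_t,g_t}(\xopt)$ by the same Busemann--distance inequality you state, followed by projection non-expansiveness and telescoping. The inequality you flag as the ``main obstacle'' is exactly Lemma~2 of \cite{criscitiello2025horospherically} (together with the scaling $B_{x,cv}=cB_{x,v}$), which the paper simply cites rather than reproves, so your remaining concern is already resolved in the literature.
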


\begin{proof}
Using Definition \ref{def:h-con} on the gradient based on Proposition 2 (iii) \cite{criscitiello2025horospherically}, we get
\begin{equation}
   \regret = \sumt f_t(x_t)-f_t(\xopt) \le -\sumt B_{x_t,g_t}(\xopt). 
\end{equation}
Using the identity $B_{x,cv}=cB_{x,v}$ (see the proof of Proposition 12 in \cite{criscitiello2025horospherically}) and applying Lemma 2 of  \cite{criscitiello2025horospherically}, we obtain 
\begin{equation*}
    -B_{x_t,g_t}(\xopt)=-\frac{B_{x_t,\eta g_t}(\xopt)}{\eta} \le \frac{|\tilde{x}_{t+1}x_t|^2+|\xopt x_t|^2-|\xopt x_{t+1}|^2}{2\eta}, 
\end{equation*}
where $\tilde{x}_{t+1} = \expm{x_{t}}(-\eta g_t)$ and $x_{t+1} = \proj{(\tilde{x}_{t+1})}$. We know by the non-expansiveness of the projection that $|x_{t+1}y|\le|\tilde{x}_{t+1}y|$ for any $y \in \mansubset$. Also, $|\tilde{x}_{t+1}x_t|=\eta \norm{g_t} \leq \eta L$. Then, summing over $t\in[T]$ gives the result as
\begin{align*}
    \regret &\le -\sumt B_{x_t,g_t}(\xopt) \le \frac{1}{2\eta} |x_1\xopt|^2 + \frac{\eta}{2} \sumt \norm{g_t}^2.
\end{align*}
\end{proof}


\begin{remark}
Compared to the results in \cite{wang2023online} based on g-convexity, our bound is independent of the sectional curvature of $\mansubset$. 
\end{remark}

The regret bound in Theorem \ref{thm:convex} matches that of Euclidean OCO with no explicit dependence on the curvature. This indicates that the outer characterization of convexity is the primary factor governing the static regret bound in the Riemannian setting. We next establish the static regret bound for $\mu$-strongly h-convex functions based on Definition \ref{def:h-str-con}.
\begin{theorem}[Regret for strongly h-convex functions]\label{thm:strongconvex}
Suppose that Assumptions \ref{assum:set} and \ref{assum:func} hold and that $f_t$ is $\mu$-strongly h-convex for any $t\in[T]$. Then, Algorithm \ref{alg:rogd} with step-size $\eta_t =1/(\mu t)$ guarantees $O(\log(T))$ static regret.
\end{theorem}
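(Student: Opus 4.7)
The plan is to mirror the Euclidean strongly-convex regret analysis, but to use Stewart's theorem (Lemma \ref{lem:stew}) to replace the role that the Euclidean parallelogram law plays. The starting point is the strong h-convexity inequality from Definition \ref{def:h-str-con} applied at $y = x_t$ and $v = g_t$ (which is justified via the gradient characterization as in the proof of Theorem \ref{thm:convex}):
\begin{equation*}
f_t(x_t) - f_t(x^*) \le \tfrac{1}{2\mu}\norm{g_t}^2 - \tfrac{\mu}{2}\,d^2(y_t, x^*),
\end{equation*}
where $y_t := \expm{x_t}(-\tfrac{1}{\mu} g_t)$ is the "ideal" full-step point. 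The challenge is that the algorithm actually moves to $\tilde{x}_{t+1} = \expm{x_t}(-\eta_t g_t)$ with the much smaller step $\eta_t = 1/(\mu t)$, so I need an upper bound on $-d^2(y_t, x^*)$ in terms of $d^2(x_t, x^*)$ and $d^2(\tilde{x}_{t+1}, x^*)$.

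This is precisely where I would invoke Lemma \ref{lem:stew}. Since $\eta_t \le 1/\mu$ for all $t\ge 1$, the iterate $\tilde{x}_{t+1}$ lies on the geodesic segment from $x_t$ to $y_t$, with $|x_t \tilde{x}_{t+1}| = \eta_t \norm{g_t}$ and $|\tilde{x}_{t+1} y_t| = (\tfrac{1}{\mu}-\eta_t)\norm{g_t}$. Applying Stewart's inequality to the triangle with vertices $a=x^*, b=x_t, c=y_t$ and cevian endpoint $p=\tilde{x}_{t+1}$, then dividing through by $|x_t y_t| = \norm{g_t}/\mu$, yields after rearrangement
\begin{equation*}
\mu\eta_t\, d^2(y_t,x^*) \ge d^2(\tilde{x}_{t+1},x^*) - (1-\mu\eta_t)\,d^2(x_t,x^*) + \eta_t(\tfrac{1}{\mu}-\eta_t)\norm{g_t}^2.
\end{equation*}
Substituting this back, using non-expansiveness of $\proj$ to replace $\tilde{x}_{t+1}$ by $x_{t+1}$, and simplifying, leads to a per-round bound of the form
\begin{equation*}
f_t(x_t)-f_t(x^*) \le \tfrac{\eta_t}{2}\norm{g_t}^2 + \tfrac{1}{2}\bigl(\tfrac{1}{\eta_t}-\mu\bigr)d^2(x_t,x^*) - \tfrac{1}{2\eta_t}d^2(x_{t+1},x^*).
\end{equation*}

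With $\eta_t = 1/(\mu t)$ the coefficients become $\tfrac{1}{\eta_t}-\mu = \mu(t-1)$ and $\tfrac{1}{\eta_t}=\mu t$, so the distance terms telescope across $t=1,\dots,T$, leaving only the nonpositive boundary term $-\tfrac{\mu T}{2}d^2(x_{T+1},x^*)$. Combined with $\norm{g_t}\le L$ from Assumption \ref{assum:func}, the remaining sum is $\sumt \tfrac{L^2}{2\mu t} \le \tfrac{L^2}{2\mu}(1+\log T)$, producing the claimed $O(\log T)$ bound. I expect the main obstacle to be step 2: setting up Stewart's inequality with the correct vertex assignment and verifying that $\tilde{x}_{t+1}$ lies on the segment $x_t y_t$, so that the coefficients line up to produce a telescoping structure. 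Everything else is bookkeeping analogous to the Euclidean case, with the curvature-dependent cosine law replaced by Lemma \ref{lem:stew}.
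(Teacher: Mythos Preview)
Your proposal is correct and follows essentially the same route as the paper: start from the strong h-convexity inequality with $v=g_t$, apply Stewart's theorem (Lemma \ref{lem:stew}) to the triangle $\Delta x^* x_t y_t$ with cevian point $\tilde{x}_{t+1}$ on the segment $x_t y_t$, simplify to the per-round bound $f_t(x_t)-f_t(x^*)\le \tfrac{\eta_t}{2}\norm{g_t}^2 + (\tfrac{1}{2\eta_t}-\tfrac{\mu}{2})d^2(x_t,x^*) - \tfrac{1}{2\eta_t}d^2(x_{t+1},x^*)$, and telescope with $\eta_t=1/(\mu t)$ to obtain $\tfrac{L^2}{2\mu}(1+\log T)$. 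The vertex assignment, the segment verification $\eta_t\le 1/\mu$, and the use of projection non-expansiveness all match the paper exactly.
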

\begin{proof}
We begin with Definition \ref{def:h-str-con} to write
\begin{equation}
\label{eq:str1}
f_t(x_t)-f_t(\xopt)\le \frac{1}{2\mu}\norm{g_t}^2-\frac{\mu}{2} d^2(\expm{x_t}(-\frac{1}{\mu} g_t),\xopt).
\end{equation}
Let $x'_t=\expm{x_t}(-\frac{1}{\mu} g_t)$. We can use Lemma \ref{lem:stew} on the geodesic triangle $\Delta\xopt x_t x'_t$ with the choice of $p=\tilde{x}_{t+1}=\expm{x_{t}}(-\eta_t g_t)$ noting that $\eta_t\le \frac{1}{\mu}$ ensures that $\tilde{x}_{t+1}$ belongs to the segment $x_t x'_t$. We then have 
\begin{align*}
|\xopt x'_t|^2 &\geq \frac{(|\xopt \tilde{x}_{t+1}|^2+ |x_t\tilde{x}_{t+1}||\tilde{x}_{t+1}x'_t| )|x_tx'_t| - |\xopt x_t|^2|\tilde{x}_{t+1}x'_t|}{|x_t\tilde{x}_{t+1}|}. 
\end{align*}
Note that $|x_t\tilde{x}_{t+1}|=\eta_t\norm{g_t}$ and $|x_tx'_t|=\frac{1}{\mu}\norm{g_t}$, so $|\tilde{x}_{t+1}x'_t|=(\frac{1}{\mu}-\eta_t)\norm{g_t}$. Replacing these in the above inequality and recalling that $d^2(x'_t,\xopt)=|\xopt x'_t|^2$ by definition, we can simplify Equation \ref{eq:str1} to get
\begin{align*}
f_t(x_t)-f_t(\xopt) &\le \frac{\eta_t}{2}\norm{g_t}^2 + (\frac{1}{2\eta_t}-\frac{\mu}{2})|\xopt x_t|^2 - \frac{1}{2\eta_t}|\xopt\tilde{x}_{t+1}|^2 \nonumber \\
&\le \frac{\eta_t}{2}\norm{g_t}^2 + (\frac{1}{2\eta_t}-\frac{\mu}{2})|\xopt x_t|^2 - \frac{1}{2\eta_t}|\xopt x_{t+1}|^2,
\end{align*}
where we used the non-expansiveness of the projection in the last inequality, i.e., $|\xopt x_{t+1}| \le |\xopt \tilde{x}_{t+1}|$.
Choosing $\eta_t=\frac{1}{\mu t}$ and simplifying the telescoping sum, we get 
\begin{equation*}
    \sumt f_t(x_t)-f_t(\xopt)\le \frac{ L^2}{2\mu} \sumt \frac{1}{t} \le \frac{ L^2}{2\mu}(1+\log(T)).
\end{equation*}
\end{proof}
This results shows that with the introduction of strong h-convexity, an analogue of strong convexity in Euclidean spaces, it becomes possible to achieve curvature-independent logarithmic regret bounds for Hadamard manifolds.

\section{Numerical Experiments}
\label{sec:numerical}


\begin{figure*}[t!]
\begin{center}
\centerline{\includegraphics[width=1.99\columnwidth,height=0.3\textwidth]{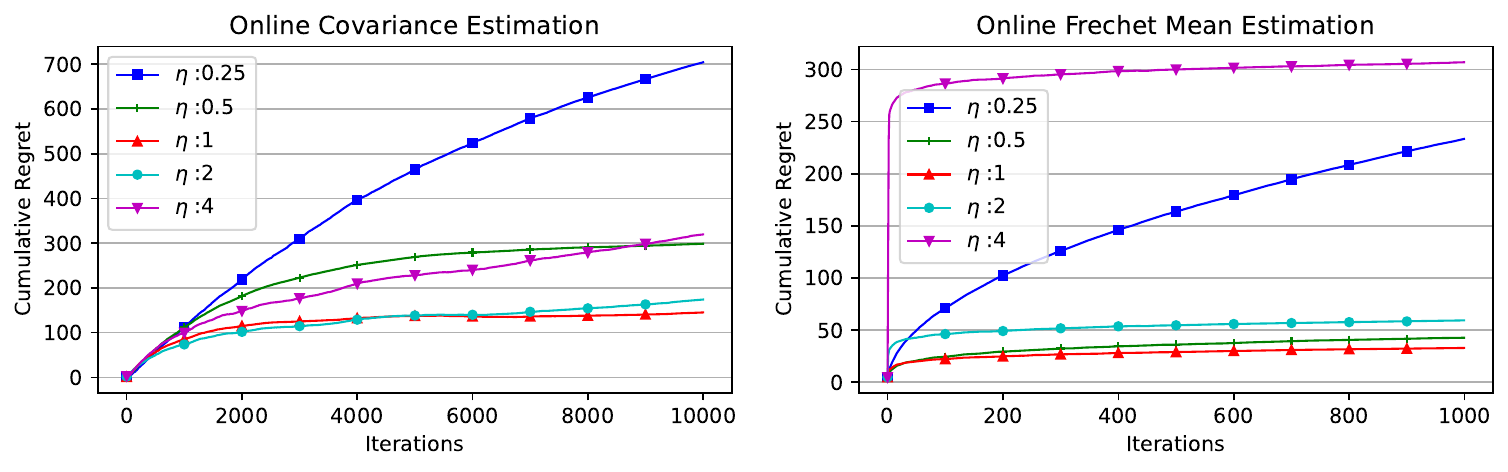}}

\caption{Cumulative regret plots with step-size a) $\eta_t=\frac{\eta}{\sqrt{t}}$ for online covariance estimation  and b) $\eta_t=\frac{\eta}{t}$ for online Fr\'echet mean estimation.}

\label{fig:regret}
\end{center}
\vskip -0.28in
\end{figure*}

We evaluate the performance of ROGD on h-convex objectives. Our experimental setting is based on the manifold of symmetric positive-definite (SPD) matrices, endowed with the affine-invariant Riemannian metric, which is a canonical example of a Hadamard manifold. This space has desirable geometric properties, including non-positive curvature and closed-form expressions for the exponential and logarithmic maps, making it a natural testbed for optimization algorithms. We consider both h-convex and 
$\mu$-strongly h-convex objectives, and design experiments to highlight the impact of these convexity notions on the regret behavior of ROGD. Before presenting our results, we first summarize the fundamental geometric properties of the SPD manifold under the affine-invariant metric, which serves as the foundation for our experimental design.

For the manifold of $n \times n$ SPD matrices, denoted by $\manspd$,  the affine-invariant metric is defined as $ \langle U,V \rangle_X = \mathrm{Tr}\!\left(X^{-1} U X^{-1} V\right)$ for tangent vectors $U,V \in \tanspspd{X}$, where $X \in \manspd$. Under this metric, geodesics between $X,Y \in \manspd$ admit the closed-form expression $\gamma(t) = X^{1/2}\!\left(X^{-1/2} Y X^{-1/2}\right)^t X^{1/2}, \quad t \in [0,1] $, which interpolates smoothly between $X$ and $Y$. The corresponding Riemannian exponential and logarithmic maps are given by $\expm{X}(U) = X^{1/2} \exp\!\left(X^{-1/2} U X^{-1/2}\right) X^{1/2}~\text{and}~\logm{X}(Y) = X^{1/2} \log\!\left(X^{-1/2} Y X^{-1/2}\right) X^{1/2}$, respectively. Moreover, the geodesic distance is expressed as $d(X,Y) = \|\log(X^{-1/2} Y X^{-1/2})\|_F,$ which depends only on the eigenvalues of $X^{-1/2} Y X^{-1/2}$. Importantly, the SPD manifold endowed with this metric has \emph{non-positive sectional curvature}, guaranteeing uniqueness of geodesics and the validity of convexity notions such as g-convexity and h-convexity. These properties make $\manspd$ a particularly well-suited setting for testing online Riemannian optimization algorithms on Hadamard manifolds.

\subsection{Online Tyler's $M$-Estimator on SPD Manifold}
We now evaluate ROGD on an online version of Tyler's $M$-estimator problem to analyze h-convex objectives. In this task, at time $t$ the environment reveals a sample $a_t\in\mathbb{R}^n$, and the learner incurs the time-dependent loss 
$$ f_t(\Sigma)\;=\;\log\big(a_t^\top \Sigma^{-1} a_t\big),\qquad \Sigma\in \manspd,$$
which is the per-sample Tyler objective. The usual Tyler estimator problem minimizes the sum of these terms subject to a scale constraint.  These log-quadratic losses are natural on $\manspd$ and 
fit into the horospherical/outer-envelope framework studied for Hadamard manifolds; in particular, such losses can be related to scaled Busemann functions on the SPD manifold and therefore admit h-convex representations under the affine-invariant geometry \cite{criscitiello2025horospherically}.

For optimization with the affine-invariant metric, the Riemannian gradient has a compact closed-form.  Writing $s_t := a_t^\top \Sigma^{-1} a_t,$ the Euclidean gradient of $f_t$ is $-\Sigma^{-1} a_t a_t^\top \Sigma^{-1}/s_t$, and the affine-invariant Riemannian gradient is $$\operatorname{grad} f_t(\Sigma)\;=\;\Sigma(-\Sigma^{-1} a_t a_t^\top \Sigma^{-1}/s_t)\Sigma\;=\;-\,a_t a_t^\top/s_t.$$
Thus, the Riemannian descent direction $-\operatorname{grad} f_t(\Sigma)$ is the positive semi-definite rank-one matrix $a_t a_t^\top/s_t$. Therefore, using the exact exponential map for the affine-invariant metric, a single ROGD step with step-size $\eta_t>0$ is 
\begin{align*}
\resizebox{\linewidth}{!}{$
   \Sigma_{t+1}
\;=\;
\expm{\Sigma_t}\!\Big(\eta_t\frac{a_t a_t^\top}{s_t}\Big)
\;=\;
\Sigma_t^{1/2}\,\exp\!\Big(\eta_t\,\Sigma_t^{-1/2}\frac{a_t a_t^\top}{s_t}\Sigma_t^{-1/2}\Big)\,\Sigma_t^{1/2}. 
$}
\end{align*}
We implement this update and evaluate the regret trajectory for h-convex objective scenario.  The closed-form expressions above enable efficient and numerically stable implementation of each online iteration on $\manspd$.

In this experiment, we consider the manifold of symmetric positive-definite matrices $\mathrm{SPD}(16)$. We generate $T = 10^4$ samples from a zero-mean Gaussian distribution with covariance matrix $\Sigma_{\text{true}}$, and each sample $a_t$ defines an objective $f_t$. The ROGD algorithm is executed with an adaptive step-size $\eta_t = \eta/\sqrt{t}$ to examine cumulative regret over different time horizons. To assess the sensitivity to the initialization, we select multiple values of $\eta$ from a predefined set $\{0.25,0.5,1,2,4\}$, and we plot the regret against $\Sigma^*$ in Fig. \ref{fig:regret}. The results demonstrate that convergence is sensitive to the choice of the initial step-size, while the algorithm consistently achieves sublinear regret growth of order $O(\sqrt{T})$, in line with our theoretical guarantee in Theorem \ref{thm:convex}.

\subsection{Online Fr\'echet Mean on SPD Manifold}

For strongly h-convex objectives, we consider the online computation of the Fr\'echet mean on the  manifold $\manspd$, equipped with the affine-invariant metric. The protocol is sequential: at each round $t$, the learner chooses a current iterate $\Sigma_t\in\manspd$ (before observing the data); then, the environment reveals a data sample $Y_t\in\manspd$, and the learner incurs the loss $f_t(\Sigma)\;=\;\tfrac{1}{2}\,d^2(\Sigma,Y_t),$
where the squared-distance function is 1-strongly h-convex. The online Fr\'echet mean problem thus seeks to minimize the cumulative squared-distance loss in an adversarial or streaming setting, typically measured via regret against the best fixed SPD matrix in hindsight. For the squared-distance objective we have the well-known Riemannian gradient formula $\operatorname{grad} f_t(\Sigma_t)\;=\;-\,\logm{\Sigma_t}(Y_t),$ where for the affine-invariant metric $$\logm{\Sigma}(X)=\Sigma^{1/2}\log\!\big(\Sigma^{-1/2}X\Sigma^{-1/2}\big)\Sigma^{1/2}.$$
Applying ROGD with step-size $\eta_t>0$ yields the compact geodesic update
\begin{align*}
\Sigma_{t+1}&=\expm{\Sigma_t}\!\big(-\eta_t\operatorname{grad}f_t(\Sigma_t)\big)
=\Sigma_t^{1/2}\!\big(\Sigma_t^{-1/2}Y_t\Sigma_t^{-1/2}\big)^{\eta_t}\!\Sigma_t^{1/2},    
\end{align*}
i.e., the learner moves from $\Sigma_t$ toward $Y_t$ along the geodesic by a fractional step-size $\eta_t$. This closed-form representation makes the online Fr\'echet mean especially convenient for numerical experiments with ROGD on $\manspd$.

In this experiment, we consider the manifold of symmetric positive-definite matrices $\mathrm{SPD}(16)$. A total of $T = 10^3$ samples are generated on the manifold, where each sample $Y_t$ defines an objective $f_t$. The ROGD algorithm is implemented with an adaptive step-size $\eta_t = \eta / t$, reflecting the strong convexity of the objective. To evaluate the sensitivity to the initialization, we test multiple values of $\eta$ chosen from a predefined set $\{0.25,0.5,1,2,4\}$. The results are reported in Fig. \ref{fig:regret}, which shows that the algorithm achieves logarithmic  regret growth, consistent with our theoretical claim in Theorem \ref{thm:strongconvex} and robust to variations in the step-size initialization. Compared to the $O(\sqrt{T})$ behavior observed for merely h-convex objectives, the strong convexity assumption yields significantly faster convergence.

\section{Conclusions}
In this work, we investigated online Riemannian optimization beyond the standard g-convexity framework by analyzing h-convex and strongly h-convex objectives. We studied the ROGD algorithm on Hadamard manifolds and established curvature independent regret bounds, thereby overcoming limitations arising from sectional curvature terms in the g-convex setting. Our numerical experiments on the manifold of SPD matrices with the affine-invariant metric, including online Tyler's estimation and online Fr\'echet mean computation, demonstrated the practical relevance of our approach. These results highlight the potential of h-convexity as a powerful tool for extending online optimization theory to broader non-Euclidean settings. Future directions include extending our theoretical framework to decentralized and stochastic Riemannian optimization, as well as exploring applications in control and large-scale machine learning.


\addtolength{\textheight}{-12cm}   






\bibliographystyle{plain}
\bibliography{refs}

\end{document}